\theoremstyle{plain}
\newtheorem{thm}{Theorem}[section]
\newtheorem{proposition}[thm]{Proposition}
\theoremstyle{definition}
\newtheorem{example}{Example}[section]
\theoremstyle{remark}
\newcommand{\ie}[0]{\emph{i.e.},~}
\newcommand{\eg}[0]{\emph{e.g.},~}
\newcommand{\aka}[0]{a.k.a.~}
\newcommand{\wrt}{w.r.t.~}
\newcommand{\refSection}[1]{\Cref{#1}}
\newcommand{\refEq}[1]{\cref{#1}}
\newcommand{\boldit}[1]{\ensuremath{#1}}
\newcommand{\sfit}[1]{\ensuremath{#1}}
\renewcommand{\Re}[0]{\ensuremath{\mathbb{R}}}
\newcommand{\OO}[0]{\ensuremath{\mathcal{O}}}
\newcommand{\NNN}[0]{\ensuremath{\mathcal{N}}}
\newcommand{\xx}[0]{\ensuremath{\boldit{v}}}
\newcommand{\x}[0]{\ensuremath{{v}}}
\newcommand{\yy}[0]{\ensuremath{\boldit{h}}}
\newcommand{\y}[0]{\ensuremath{{h}}}
\newcommand{\Y}[0]{\ensuremath{{H}}}
\newcommand{\X}[0]{\ensuremath{{V}}}
\newcommand{\WW}[0]{\ensuremath{\boldit{W}}}
\newcommand{\W}[0]{\ensuremath{{W}}}
\newcommand{\pp}[0]{\ensuremath{\sfit{p}}}
\newcommand{\ff}[0]{\ensuremath{\sfit{f}}}
\newcommand{\ffinv}[0]{\ensuremath{\sfit{f}^{-1}}}
\newcommand{\FF}[0]{\ensuremath{\sfit{F}}}
\newcommand{\ZZ}[0]{\ensuremath{\sfit{A}}}
\newcommand{\DD}[0]{\ensuremath{\boldit{\sfit{D}}}}
\newcommand{\hh}[0]{\ensuremath{\sfit{r}}}
\newcommand{\gh}[0]{\ensuremath{\sfit{g}}}
\newcommand{\EE}[0]{\ensuremath{\mathbb{E}}}
\newcommand{\energy}[0]{\ensuremath{E}}
\newcommand{\YY}[0]{\ensuremath{\mathcal{H}}}
\newcommand{\dd}[0]{\ensuremath{\mathrm{d}}}
\newcommand{\nn}[1]{\ensuremath{^{(#1)}}}
\newcommand{\normal}[0]{\ensuremath{\mathcal{N}}}
\newcommand{\eps}[0]{\ensuremath{\varepsilon}}
\newcommand{\identt}[0]{\ensuremath{\mathbb{I}}}
\title{Stochastic Neural Networks with Monotonic Activation Functions}
\author[1]{Siamak Ravanbakhsh, Barnab\'{a}s P\'{o}czos, Jeff Schneider}
\author[2]{\\Dale Schuurmans, Russell Greiner}
\affil[1]{Carnegie Mellon University, 5000 Forbes Ave, Pittsburgh, PA 15213}
\affil[2]{University of Alberta, Edmonton, AB T6G 2E8, Canada}
\date{}
\def\blfootnote{\gdef\@thefnmark{}\@footnotetext}
\begin{document}

%

%

\twocolumn[

\maketitle


]

\begin{abstract}
We propose a Laplace approximation that creates a stochastic unit 
from any smooth monotonic activation function, using only Gaussian noise. 
This paper investigates the application of this stochastic approximation
in training a family of Restricted Boltzmann Machines (RBM) that are closely
linked to Bregman divergences.  
This family, that we call exponential family RBM (Exp-RBM),
is a subset of the exponential family Harmoniums that expresses
family members 
through a choice of smooth monotonic 
non-linearity for each neuron.
Using contrastive divergence along with our Gaussian approximation, we show that  
Exp-RBM can learn useful representations using novel stochastic units.
\end{abstract}

\section{Introduction}
\blfootnote{{\tiny Appearing in Proceedings of the 19th International Conference
on Artificial Intelligence and Statistics (AISTATS)
2016, Cadiz, Spain. JMLR: W\&CP volume 41. Copyright
2016 by the authors}}
Deep neural networks~\citep{lecun2015deep,bengio2009learning} have produced some of the best results in complex pattern recognition tasks where the training data is abundant. 
Here, we are interested in deep learning for generative modeling.
Recent years has witnessed a surge of interest in directed generative models that are 
trained using (stochastic) back-propagation~\citep[\eg][]{kingma2013auto,rezende2014stochastic,goodfellow2014generative}.
These models are distinct from deep energy-based models -- including deep Boltzmann machine~\citep{hinton2006fast} and (convolutional) deep belief network \citep{salakhutdinov2009deep,lee2009convolutional} -- that rely on a bipartite graphical model called restricted Boltzmann machine (RBM) in each layer. Although, due to their use of Gaussian noise, the stochastic units that we introduce in this paper can be potentially used with stochastic back-propagation, this paper is limited to applications in RBM.

To this day, the choice of stochastic units in RBM has been constrained to well-known members of the exponential family; in the past RBMs have used units with Bernoulli~\citep{smolensky1986information}, Gaussian~\citep{freund1994unsupervised,marks2001diffusion}, categorical~\citep{welling2004exponential}, Gamma~\citep{welling2002learning} and Poisson~\citep{gehler2006rate} conditional distributions.
The exception to this specialization, is the Rectified Linear Unit that was introduced with a (heuristic) sampling procedure~\citep{nair2010rectified}.

This limitation of RBM to well-known exponential family members is despite the fact that \citet{welling2004exponential} 
introduced a generalization of RBMs, called Exponential Family Harmoniums (EFH), 
covering a large subset of exponential family with bipartite structure. 
The architecture of EFH does not suggest a procedure connecting the EFH to \textit{arbitrary non-linearities} and more importantly a general sampling procedure is missing.\footnote{
As the concluding remarks of \cite{welling2004exponential} suggest, this capability is indeed desirable:``A future challenge is therefore to start the modelling process with the desired non-linearity and to subsequently introduce auxiliary variables to facilitate inference and learning.''}
We introduce a useful subset of the EFH, which we 
call exponential family RBMs (Exp-RBMs), 
with an approximate sampling procedure addressing these shortcomings.

The basic idea in Exp-RBM is simple: restrict the sufficient statistics to identity function. 
This allows definition of each unit using only its mean stochastic activation, which is the non-linearity of the neuron.
With this restriction, not only we gain interpretability, but also trainability; 
we show that it is possible to efficiently sample the activation of these stochastic neurons and train the resulting model using contrastive divergence.
Interestingly, this restriction also closely relates the generative training of Exp-RBM to  
 discriminative training using the matching loss and its regularization by noise injection.

In the following, \Cref{sec:model} introduces the Exp-RBM family and
\Cref{sec:learning} investigates learning of Exp-RBMs via an efficient approximate sampling procedure. 
Here, we also establish connections to discriminative training and
produce an interpretation of stochastic units in Exp-RBMs as an infinite collection of 
Bernoulli units with different activation biases. 
\Cref{sec:experiments} demonstrates the effectiveness of
the proposed sampling procedure, when combined with contrastive divergence training, in data representation.


\section{The Model}\label{sec:model}

The conventional RBM models the joint probability 
$\pp(\xx, \yy \mid \WW)$ for visible variables $\xx = [\x_1,\ldots,\x_i,\ldots,\x_I]$ with $\xx \in \mathcal{\X}_1 \times \ldots \times \mathcal{\X}_I $ and hidden variables $\yy = [\y_1,\ldots,\y_j,\ldots \y_J]$ with $\yy \in \mathcal{\Y}_1 \times \ldots\times \mathcal{\Y}_J$ as
\begin{align*}
\pp(\xx, \yy \mid \WW) = \exp(-\energy(\xx,\yy) - \ZZ(\WW)).
\end{align*}
This joint probability is a Boltzmann distribution with a particular energy function $\energy: \mathcal{\X} \times \mathcal{\Y} \to \Re$ and a normalization function $A$.
The distinguishing property of RBM compared to other Boltzmann distributions is the conditional independence due to its bipartite structure. 

\citet{welling2004exponential} construct Exponential Family Harmoniums (EFH), by first 
constructing independent distribution over individual variables: considering a hidden variable $\y_j$, its sufficient statistics $\{t_b\}_b$  and
canonical parameters $\{\tilde{\eta}_{j,b}\}_b$, this independent distribution is
\begin{align*}
  \pp(\y_j) = \hh(\y_j) \exp\bigg(\sum_{b} \tilde{\eta}_{j,b}\, t_b(\y_j) -\ZZ(\{\tilde{\eta}_{j,b}\}_b) \bigg)
\end{align*}
where $\hh: \mathcal{\Y}_j \to \Re$ is the \textit{base measure} and $\ZZ(\{\eta_{i,a}\}_a)$
is the normalization constant. 
Here, for notational convenience, we are assuming functions with distinct inputs are distinct -- \ie 
$t_b(\y_j)$ is not necessarily the same function as $t_b(\y_{j'})$, for $j' \neq j$.

The authors then combine these independent distributions using quadratic terms that reflect the bipartite structure of the EFH to get its joint form
\begin{align}
  \label{eq:EFH_joint}
  \pp(\xx, \yy) \propto &\exp\bigg(\sum_{i,a} \tilde{\nu}_{i,a} \,t_a(\x_i) \\
+ &\sum_{j,b} \tilde{\eta}_{j,b}\, t_b(\y_j) + \sum_{i,a,j,b} \W_{i,j}^{a,b} t_a(\x_i) t_b(\y_j) \bigg) \notag
\end{align}
where the normalization function is ignored and the base measures are represented as additional sufficient statistics with fixed parameters. In this model, the conditional distributions are 
\begin{align*}
  \pp(\x_i \mid \yy) = \exp\bigg( \sum_{a} {\nu}_{i,a} t_a(\x_j) -\ZZ(\{{\nu}_{i,a}\}_a \bigg)\\
  \pp(\y_j \mid \xx) = \exp\bigg( \sum_{b} {\eta}_{j,b} t_b(\y_j) -\ZZ(\{{\eta}_{j,b}\}_b \bigg)
\end{align*}
where the \textit{shifted} parameters ${\eta}_{j,b} = \tilde{\eta}_{j,b} + \sum_{i,a} \W_{i,j}^{a,b} t_a(\x_i)$ 
and ${\nu}_{i,a} = \tilde{\nu}_{i,a} + \sum_{j,b} \W_{i,j}^{a,b} t_b(\y_j)$
incorporate the effect of evidence in network on the random variable of interest. 

It is generally not possible to efficiently sample these conditionals (or the joint probability) for arbitrary sufficient statistics.
More importantly, the joint form of \refEq{eq:EFH_joint} and its energy function are ``obscure''. This is in the sense that   
the base measures $\{\hh\}$, depend on the choice of sufficient statistics and the normalization function $A(\WW)$. In fact for a fixed set of sufficient statistics $\{t_a(\x_i)\}_i, \{t_b(\y_j)\}_j$, different compatible choices of normalization constants and base measures may produce diverse subsets of the exponential family. Exp-RBM is one such family, where sufficient statistics are identity functions. 


\begin{table*}
  \caption{\small \it Stochastic units, their conditional distribution (\refEq{eq:bregman_p}) and the Gaussian approximation to this distribution. 
Here $\mathrm{Li}(\cdot)$
    is the polylogarithmic function and $\identt(\mathrm{cond.})$ is equal to one if the condition is satisfied and zero otherwise. $\mathrm{ent}(p)$ is the binary entropy function.}\label{table:units}
  \begin{center}
    \scalebox{.7}{
      \begin{tabu}{|[2pt] c |[2pt]  c | c | c  |[2pt]}\hline
        unit name & non-linearity  $\ff(\eta)$    & Gaussian approximation & conditional dist $\pp(\y \mid \eta)$  \\\tabucline[2pt]{-}
        Sigmoid (Bernoulli) Unit & ${(1 + e^{-\eta})^{-1}}$&      -                 & $\exp \{ \eta \y -  \log(1 + \exp(\eta))\}$   \\\hline
        Noisy Tanh Unit   & ${(1 + e^{-\eta})^{-1}} - \frac{1}{2}$& $\NNN(\ff(\eta) , (\ff(\eta) - 1/2)(\ff(\eta) + 1/2))$ & $\exp \{ \eta \y - \log(1 + \exp(\eta)) + \mathrm{ent}(\y) - \gh(\y) \}$\\\hline
        ArcSinh Unit & $\log(\eta + \sqrt{1 + \eta^2})$ & $\NNN(\mathrm{sinh}^{-1}(\eta) , (\sqrt{1 + \eta^2})^{-1})$ & $\exp \{ \eta \y - \mathrm{cosh}(\y) + \sqrt{1 + \eta^2} - \eta\, \mathrm{sin}^{-1}(\eta) - \gh(\y) \}$ \\\hline
        Symmetric Sqrt Unit (SymSqU)  & $\mathrm{sign}(\eta)\sqrt{|\eta|}$ & $\NNN(\ff(\eta) ,  \sqrt{|\eta|}/2)$ & $\exp \{ \eta \y - |\y|^3/3 - 2(\eta^{2})^{\frac{3}{4}}/3 - \gh(\y) \}$ \\\hline
        Linear (Gaussian) Unit & $\eta$                 & $\NNN(\eta , 1)$       & $\exp \{ \eta \y - \frac{1}{2}(\eta^2) - \frac{1}{2}(\y^2) -\log(\sqrt{2\pi}) \}$\\\hline
        Softplus Unit & $\log(1 + e^{\eta})$ & $\NNN(\ff(\eta) , (1 + e^{-\eta})^{-1})$ & $\exp \{ \eta \y + \mathrm{Li}_{2}(-e^{\eta}) + \mathrm{Li}_2(e^{\y}) + \y \log(1 - e^{\y}) - \y \log(e^{\y} - 1) - \gh(\y)\}$ \\\hline
        Rectified Linear Unit (ReLU)  & $\max(0,\eta)$ & $\NNN(\ff(\eta) , \identt(\eta > 0))$ &  - \\\hline
        Rectified Quadratic Unit (ReQU)  & $\max(0,\eta |\eta|)$ & $\NNN(\ff(\eta) , \identt(\eta > 0) \eta)$ & - \\\hline
        Symmetric Quadratic Unit (SymQU)  & $\eta |\eta|$ & $\NNN(\eta |\eta| ,  |\eta|)$ & $\exp \{ \eta \y - |\eta|^3/3 - 2(\y^{2})^{\frac{3}{4}}/3 - \gh(\y) \}$\\\hline
        Exponential Unit  & $e^{\eta}$ & $\NNN(e^\eta , e^\eta)$ & $\exp \{ \eta \y - e^{\eta} - \y (\log(y) - 1) - \gh(\y) \}$\\\hline
        Sinh Unit & $\frac{1}{2}(e^{\eta} - e^{-\eta})$ & $\NNN(\mathrm{sinh}(\eta) , \mathrm{cosh}(\eta))$ & $\exp \{ \eta \y - \mathrm{cosh}(\eta) + \sqrt{1 + \y^2} - \y\, \mathrm{sin}^{-1}(\y) - \gh(\y) \}$\\\hline
        Poisson Unit  & $e^{\eta}$ & - & $\exp \{ \eta \y - e^{\eta} - \y! \}$ \\\hline
      \end{tabu}
    }
  \end{center}
\end{table*}

\subsection{Bregman Divergences and Exp-RBM}\label{sec:bregman}
Exp-RBM restricts the sufficient statistics $t_a(\x_i)$ and $t_b(\y_j)$ to single
identity functions $\x_i, \y_j$ for all $i$ and $j$. This means the RBM has a single weight matrix $\WW \in \Re^{I \times J}$. As before, each hidden unit $j$, receives an input $\eta_j = \sum_{i} W_{i,j} \x_i$ and similarly each visible unit $i$ receives the input $\nu_i = \sum_{j} W_{i,j} \y_j$.
\footnote{Note that we ignore the ``bias parameters'' $\tilde{\nu}_i$ and $\tilde{\eta}_j$, since they can be encoded using the weights for additional hidden or visible units ($\y_j = 1, \x_i = 1$)  that are clamped to one.}

Here, the conditional distributions $\pp(\x_i \mid \nu_i)$ and $\pp(\y_j \mid \eta_j)$ 
 have a single \textit{mean parameter}, $\ff(\eta) \in \mathcal{M}$, which is equal to the mean of the conditional distribution. We could freely assign any desired continuous and monotonic non-linearity $\ff: \Re \to \mathcal{M} \subseteq \Re$ to represent the mapping from canonical parameter $\eta_j$ to this mean parameter: $\ff(\eta_j) = \int_{\mathcal{\Y}_j} \y_j \pp(\y_j \mid \eta_j)\, \mathrm{d} \y_j$.
This choice of $\ff$ defines the conditionals
\begin{align}
  \label{eq:bregman_p}
  \pp(\y_j \mid \eta_j) &= \exp \bigg( -\DD_{\ff}(\eta_j\,\|\, \y_j) - \gh(\y_j) \bigg) \\
  \pp(\x_i \mid \nu_i) &= \exp \bigg( -\DD_{\ff}(\nu_i \,\|\, \x_i) - \gh(\x_i) \bigg)\notag
\end{align}
where $\gh$ is the base measure and $\DD_{\ff}$ is the Bregman divergence for the function $\ff$.

The Bregman divergence~\citep{bregman1967relaxation,banerjee2005clustering} between $\y_j$
and $\eta_j$ for a monotonically increasing transfer function (corresponding to the activation function) $\ff$ is given by\footnote{
The conventional form of Bregman divergence is
   $\DD_{\ff}(\eta_j\, \|\,\y_j ) = \FF(\eta_j) - \FF(\ffinv(\y_j)) - \y_j (\eta_j - \ffinv(\y_j))$,
where $\FF$ is the anti-derivative of $\ff$.
Since $\FF$ is strictly convex and differentiable, it has a Legendre-Fenchel dual
$\FF^*(\y_j) = \sup_{\eta_j} \langle \y_j, \eta_j\rangle - \FF(\eta_j)$.
 Now, set the derivative of the r.h.s.  \wrt $\eta_j$ to zero to get $\y_j = \ff(\eta_j)$, or $\eta_j = \ffinv(\y_j)$, where $\FF^*(\y_j)$ is the anti-derivative of $\ffinv(\y_i)$. 
Using the duality to switch $\ff$ and $\ffinv$ in the above we can get $\FF(\ffinv(\y_j)) = \y_j \ffinv(\y_j) - \FF^*(\y_j)$. By replacing this in the original form of Bregman divergence we get the alternative form of \Cref{eq:bregman}.}
\begin{align}
  \label{eq:bregman}
  \DD_{\ff}(\eta_j\, \|\, \y_j) = - \eta_j \y_j + \FF(\eta_j) + \FF^*(\y_j)
\end{align}
where $\FF$ with $\frac{\dd}{\dd \eta} \FF(\eta_j) =  \ff(\eta_j)$ is the anti-derivative of $\ff$
and $\FF^*$ is the anti-derivative of $\ffinv$. Substituting this expression for Bregmann divergence in
\refEq{eq:bregman_p}, we notice both $\FF^*$ and $\gh$ are functions of $\y_j$. 
In fact, these two functions are often not separated \citep[\eg][]{mccullagh1989generalized}. By separating them
we see that some times, $\gh$ simplifies to a constant, enabling us to approximate \Cref{eq:bregman_p} in \Cref{sec:sampling}.

\begin{mdframed}[style=MyFrame]
\begin{example}\label{ex:gaussian}
  Let $\ff(\eta_j) = \eta_j$ be a linear neuron. Then $\FF(\eta_j) = \frac{1}{2} \eta_j^2$ and 
  $\FF^*(\y_j) = \frac{1}{2} \y_j^2$, giving a Gaussian conditional distribution
  $\pp(\y_j \mid \eta_j) = e^{- \frac{1}{2}( \y_j - \eta_j)^2  + \gh(\y_j) } $, where 
  $\gh(\y_j) = \log(\sqrt{2\pi})$ is a constant. 
\end{example}
\end{mdframed}

\subsection{The Joint Form}
So far we have defined the conditional distribution of our Exp-RBM as members of, 
using a single mean parameter $\ff(\eta_j)$ (or $\ff(\nu_i)$ for visible units) that represents the activation function of the neuron. Now we would like to find the corresponding joint form and the energy function.

The problem of relating the local conditionals to the joint form in graphical models goes back to the work of \citet{besag1974spatial}.
It is easy to check that, using the more general treatment of \citet{yang2012graphical}, 
the joint form corresponding to the conditional of \refEq{eq:bregman_p} is 
\begin{align}
  \label{eq:joint}
  &\pp(\xx, \yy \mid \WW) \, = \, \exp \bigg ( \xx^T \cdot \WW \cdot \yy  \\
 &- \sum_{i} \big(\FF^*(\x_i) + \gh(\x_i)\big) 
 - \sum_{j} \big(\FF^*(\y_j) + \gh(\y_j) \big) - \ZZ(\WW) \bigg ) \notag
\end{align}
where $\ZZ(\WW)$ is the joint normalization constant. It is noteworthy that only the anti-derivative of $\ffinv$, $\FF^*$ appears in the joint form and $\FF$ is absent.  
From this, the energy function is
\begin{align}\label{eq:energy}
   &\energy(\xx, \yy) = 
  - \xx^T \cdot \WW \cdot \yy  \\
 &+ \sum_{i} \big(\FF^*(\x_i) + \gh(\x_i)\big) + \sum_{j} \big(\FF^*(\y_j) + \gh(\y_j) \big).  \notag
 \end{align}
\begin{mdframed}[style=MyFrame]
\begin{example}
  For the sigmoid non-linearity $\ff(\eta_j) = \frac{1}{1 + e^{-\eta_j}}$, 
  we have $\FF(\eta_j) = \log(1 + e^{\eta_j})$ and $\FF^*(\y_j) = (1 - \y_j)\log(1 - \y_j) + \y_j \log(\y_j)$ is the negative entropy.
  Since $\y_j \in \{0, 1\}$ only takes extreme values, the negative entropy  $\FF^*(\y_j)$ evaluates to zero: 
\begin{equation}\label{eq:sigmoid_p}
\pp(\y_j \mid \eta_j) = \exp \bigg ( \y_j \eta_j - \log(1 + \exp(\eta_j)) - \gh(\y_j)\bigg)
\end{equation}
  Separately evaluating this expression for $\y_j = 0$ and $\y_j = 1$, shows that the above conditional is a well-defined distribution for $\gh(\y_j) = 0$, and in 
fact it turns out to be the sigmoid function itself -- \ie 
  $\pp(\y_j = 1 \mid \eta_j) = \frac{1}{1 + e^{-\eta_j}}$. 
  When all conditionals in the RBM are of the form \refEq{eq:sigmoid_p} -- \ie for a binary RBM with a sigmoid non-linearity, since $\{\FF(\eta_j)\}_j$ and $\{\FF(\nu_i)\}_i$ do not appear in the joint form \refEq{eq:joint} and $\FF^*(0) = \FF^*(1) = 0$, the joint form has the simple and the familiar form 
  $
  \pp(\xx, \yy) \, = \, \exp \big ( \xx^T \cdot \WW \cdot \yy  - \ZZ(\WW) \big )
  $.
\end{example}
\end{mdframed}
\begin{figure*}
  \centering
  \captionsetup{position=top}
\hbox{
    \subcaptionbox{\small ArcSinh unit }{\includegraphics[width=.24\textwidth]{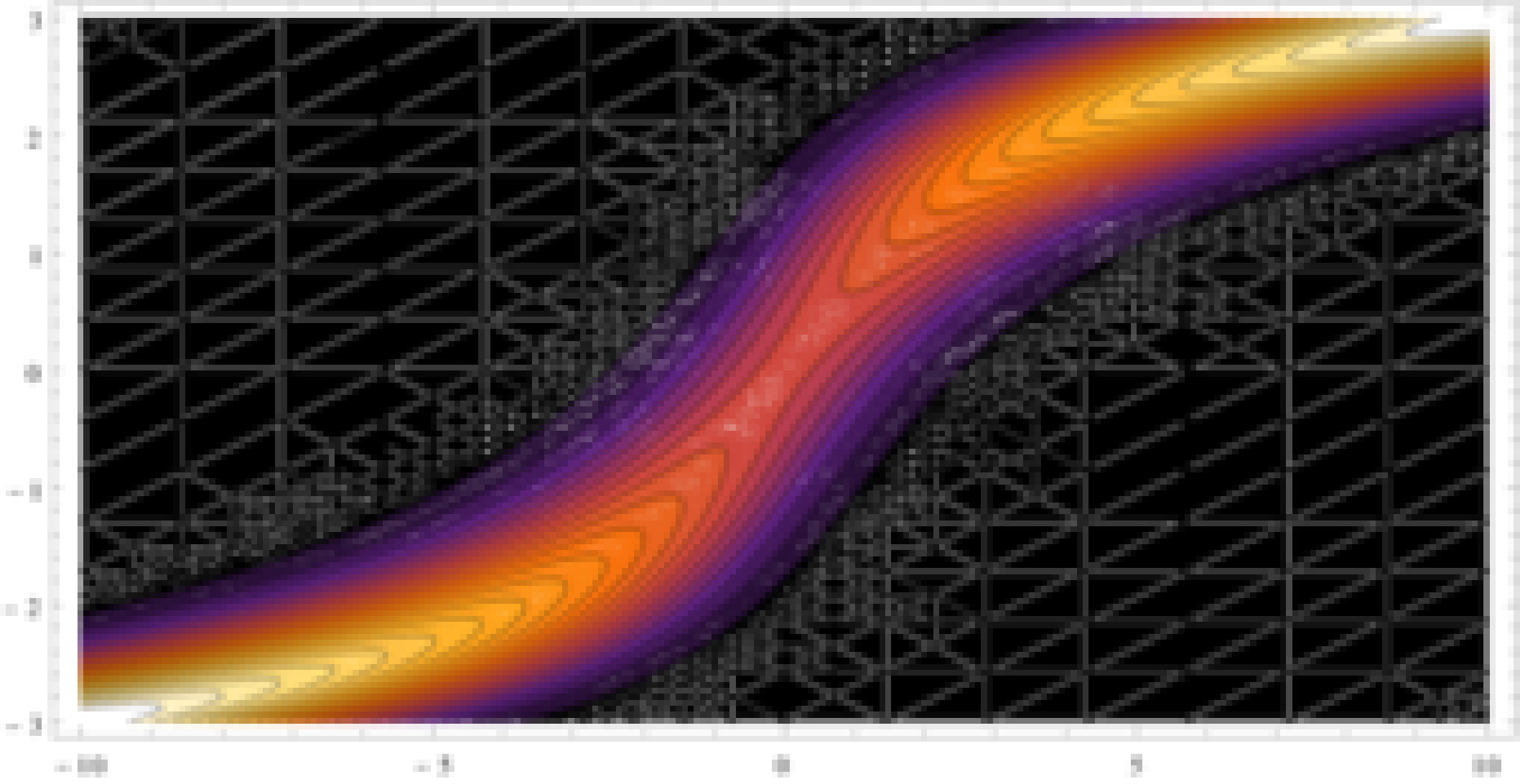}}
    \subcaptionbox{\small Sinh unit}{\includegraphics[width=.24\textwidth]{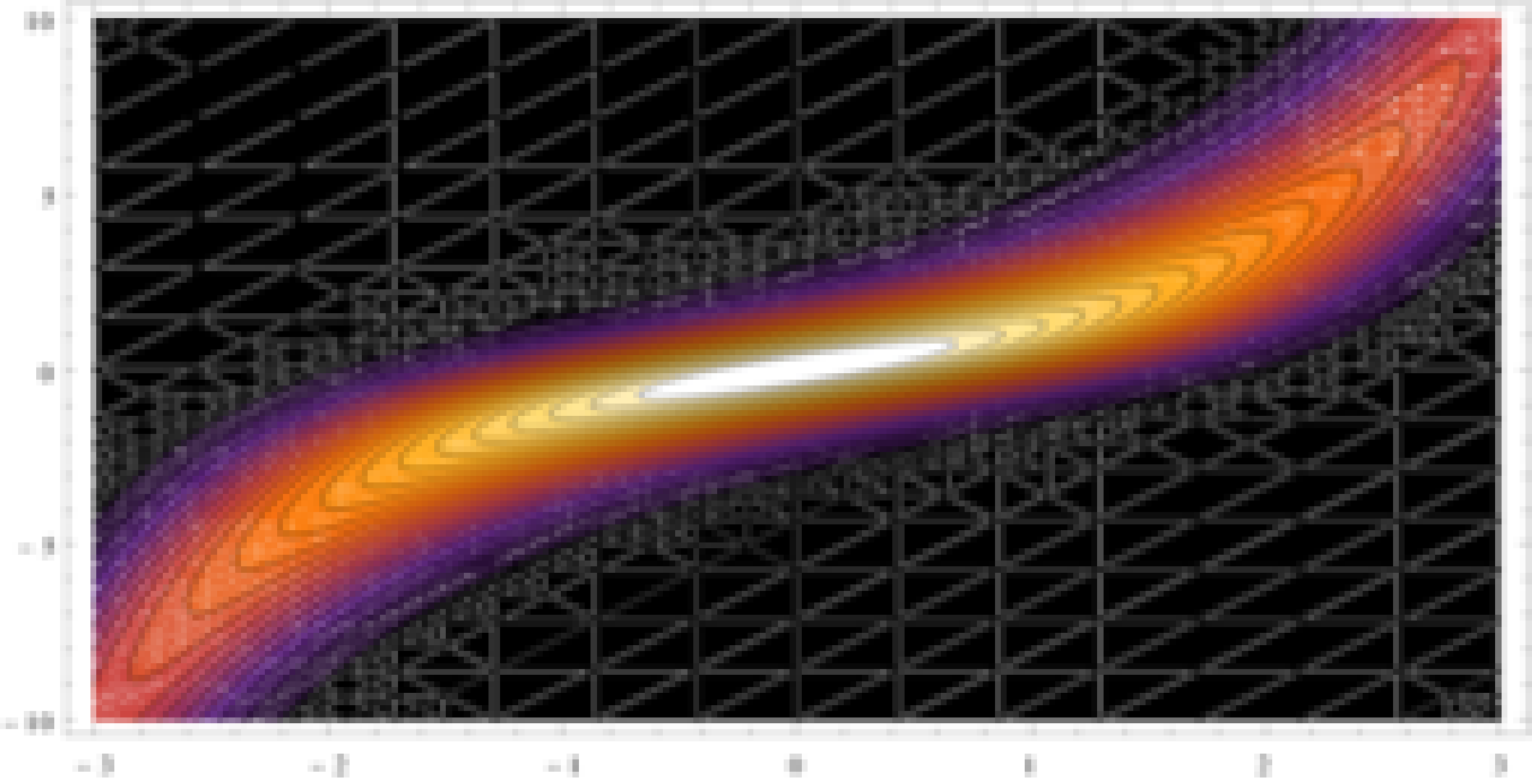} \label{fig:linear}}
    \subcaptionbox{\small Softplus unit}{\includegraphics[width=.24\textwidth]{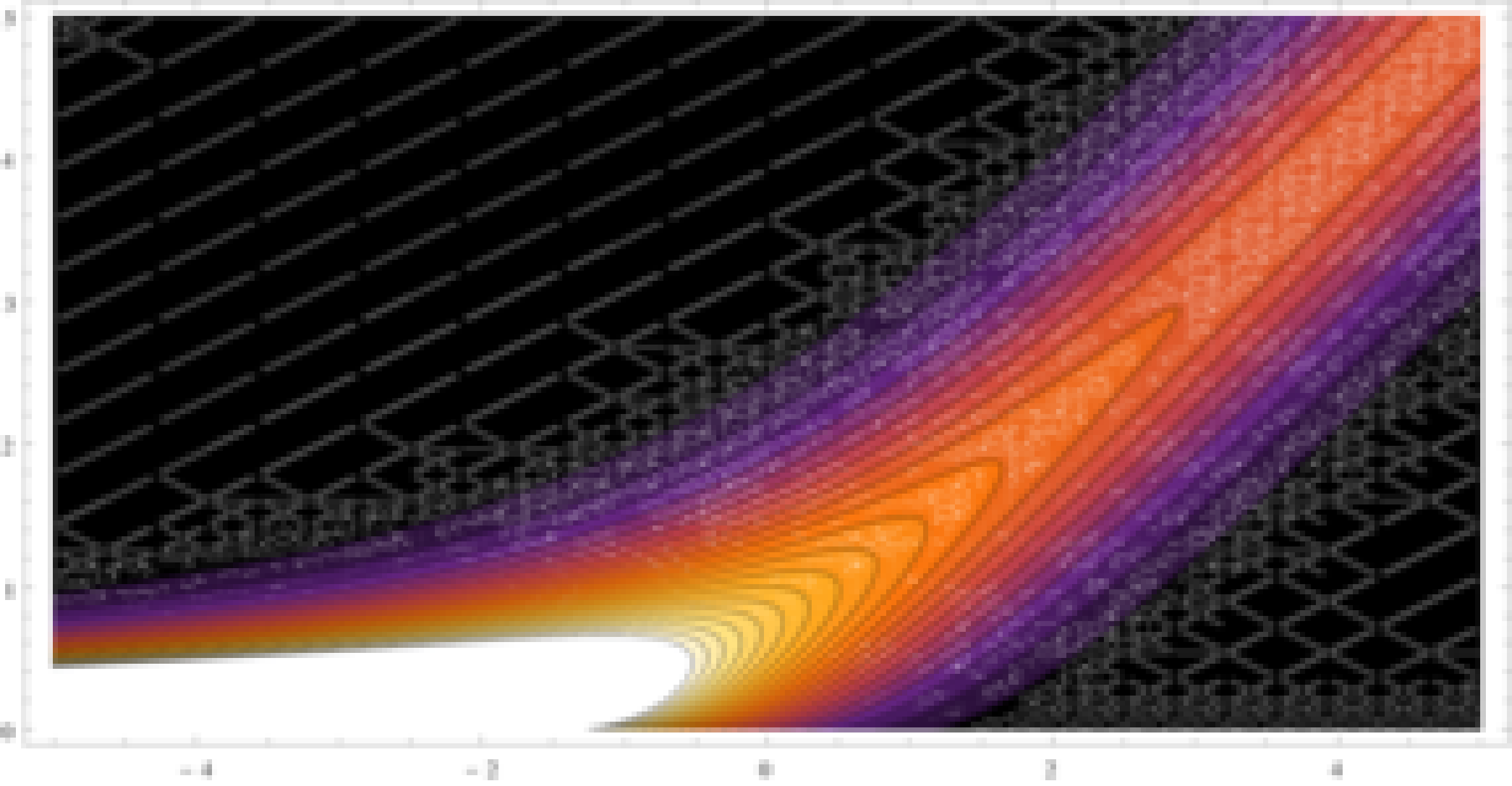}}
    \subcaptionbox{\small Exp unit}{\includegraphics[width=.24\textwidth]{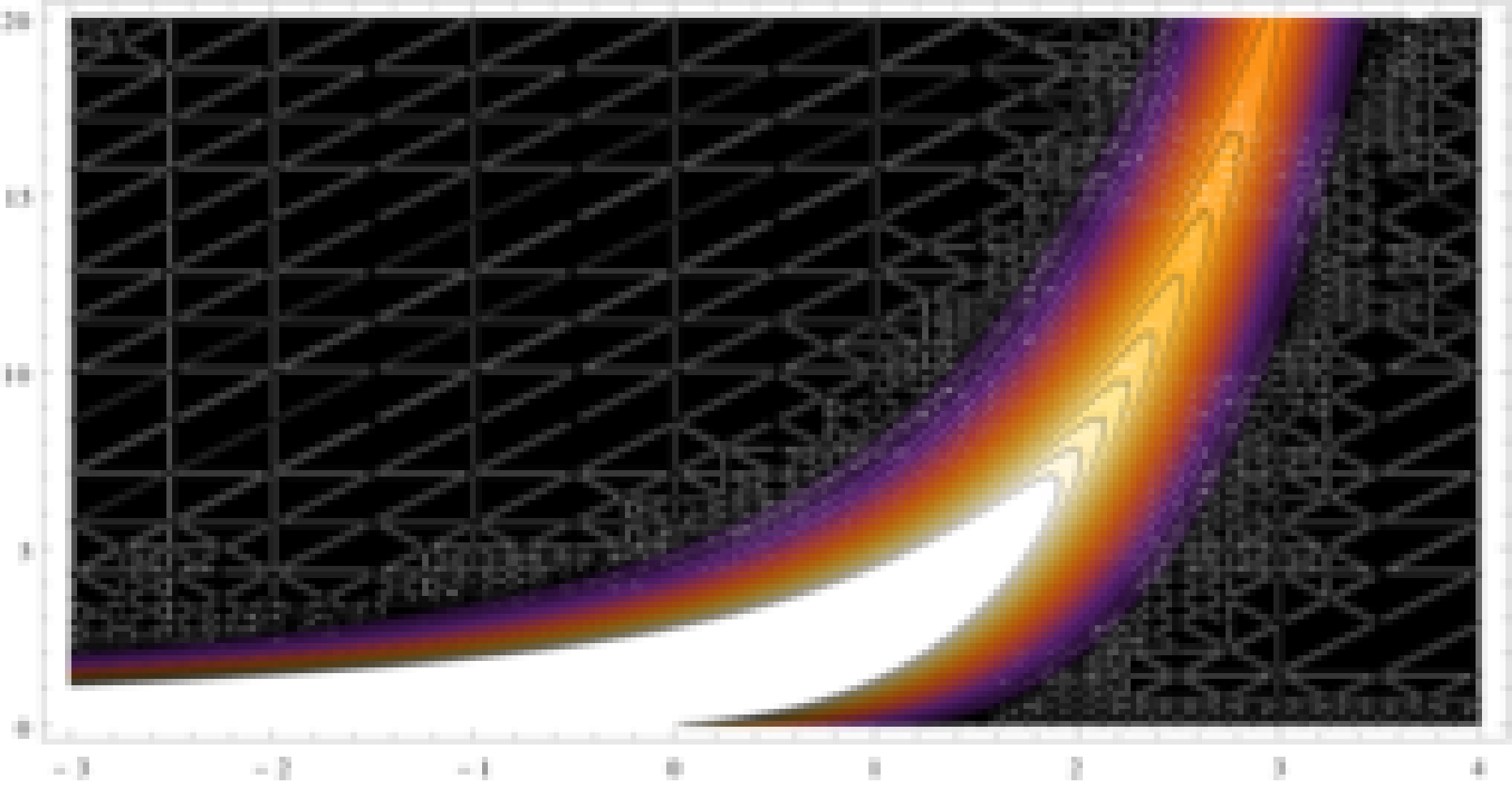}}
  }
  \hbox{
    {\includegraphics[width=.24\textwidth]{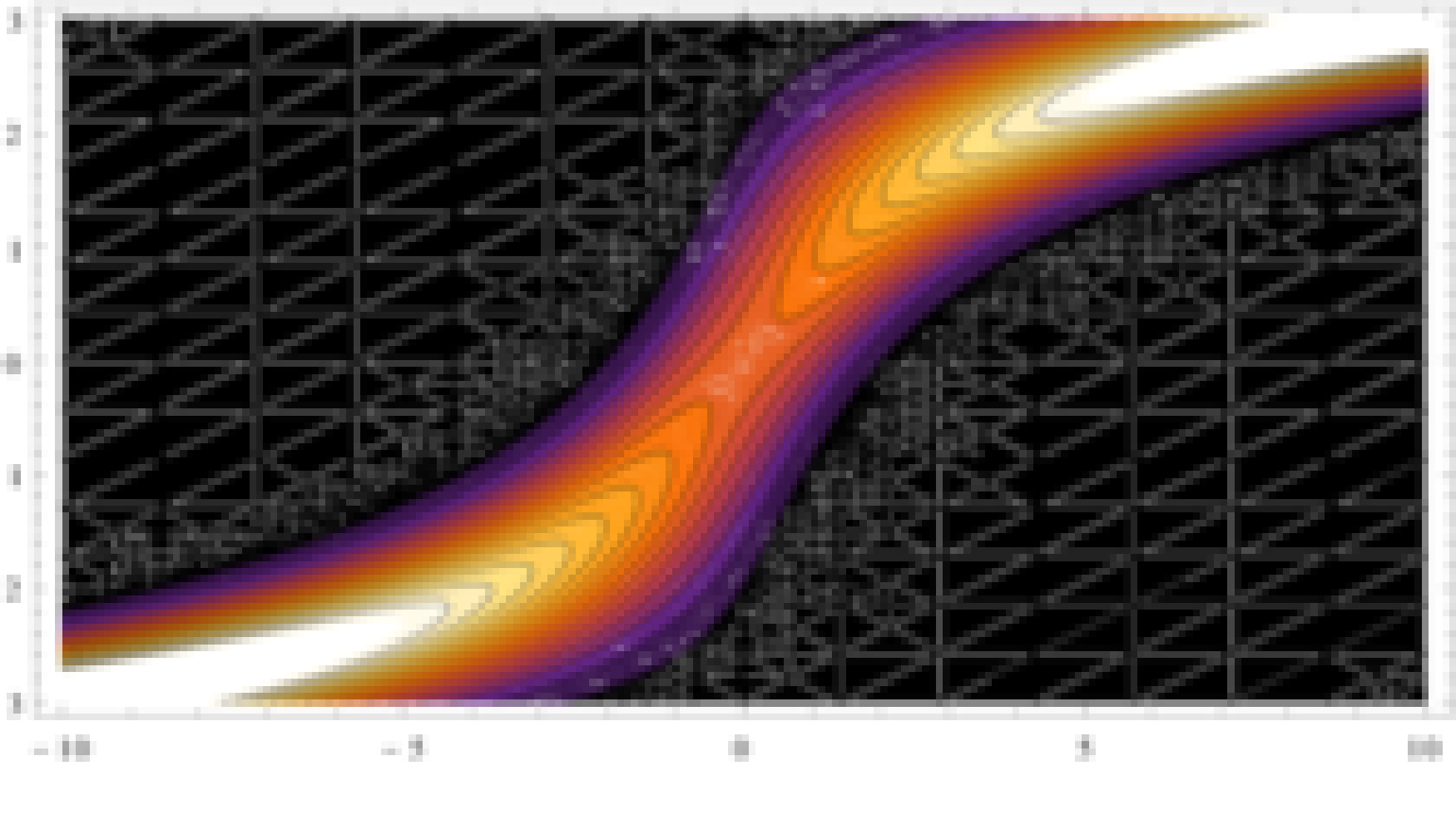}}
    {\includegraphics[width=.24\textwidth]{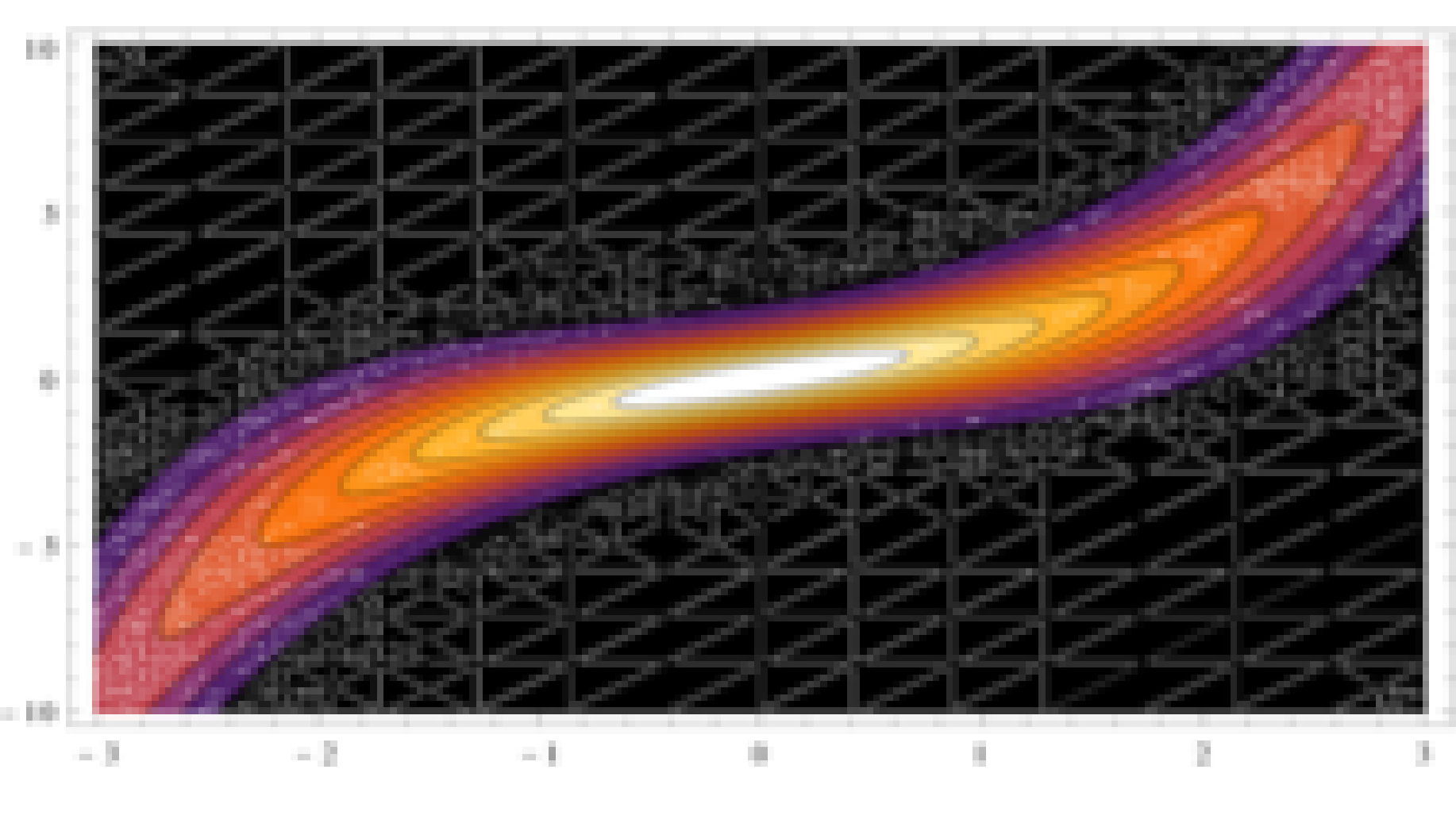}}
    {\includegraphics[width=.24\textwidth]{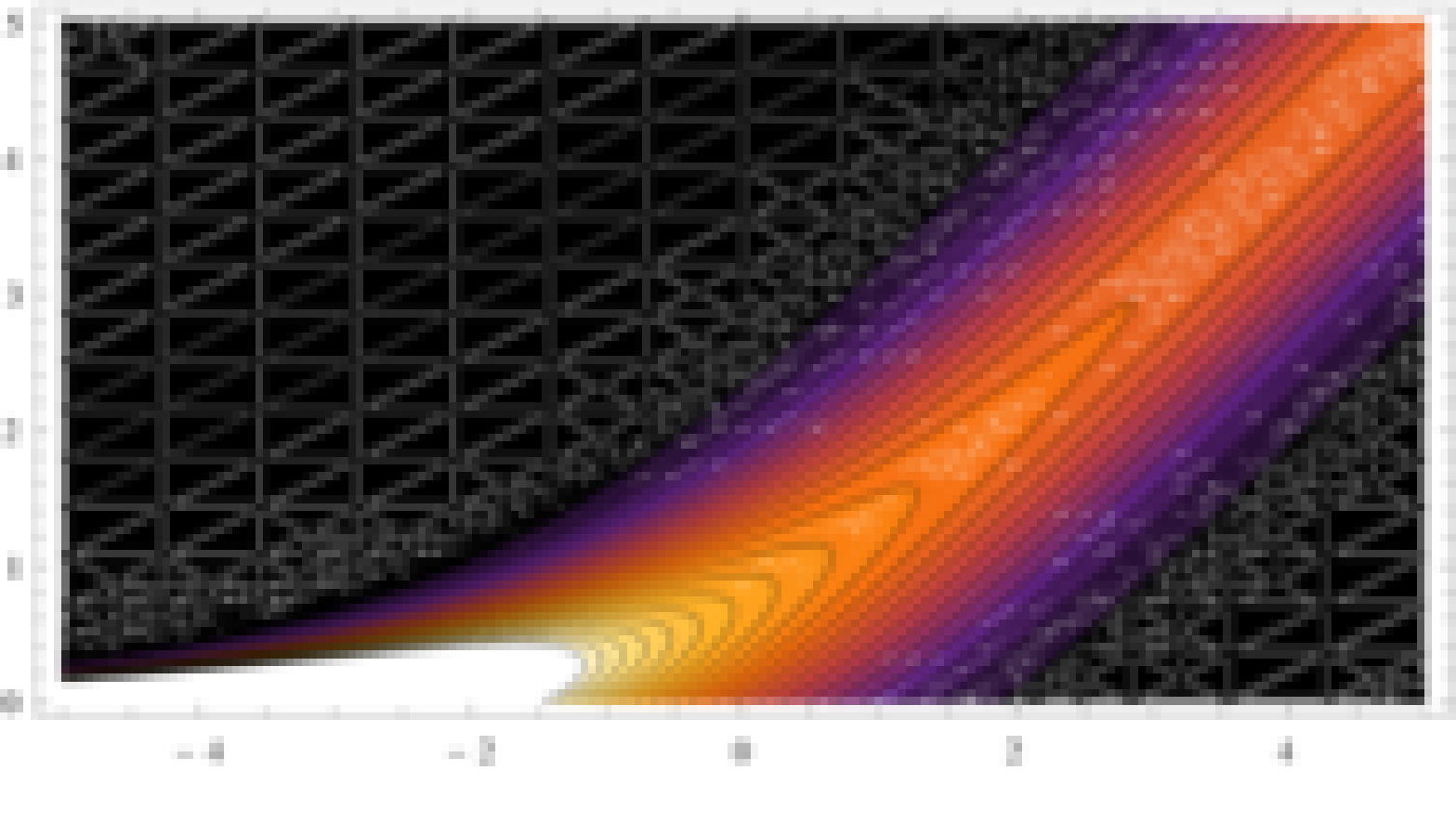}}
    {\includegraphics[width=.24\textwidth]{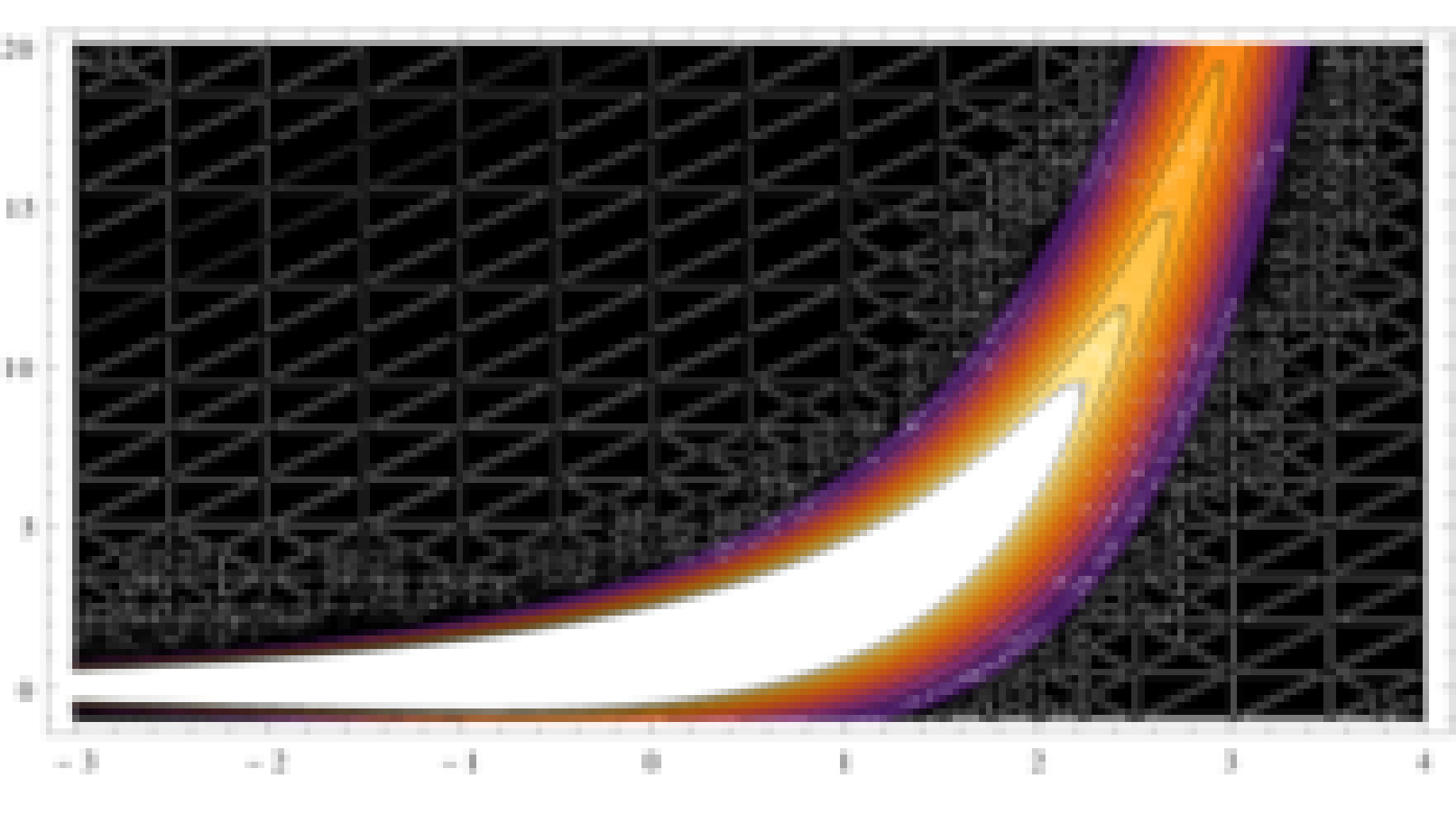}}
  }
  \caption{\it \small Conditional probability of \refEq{eq:approx1} for different 
    stochastic units (top row) and the Gaussian approximation of Proposition~\ref{prop} (bottom row) for the same unit. 
 Here the horizontal axis is the input $\eta _j= \sum_{i} W_{i,j} \x_i$ and the vertical axis is the stochastic activation $\y_j$ with the intensity $\pp(\y_j \mid \eta_j)$.
 see \Cref{table:units} for more details on these stochastic units.}
  \label{fig:comparison_sampling}
\end{figure*}

\section{Learning}\label{sec:learning}
A consistent estimator for the parameters $\WW$, given observations $\mathcal{D} = \{\xx\nn{1},\ldots,\xx\nn{N}\}$, is obtained by maximizing the marginal likelihood $\prod_{n}\pp(\xx\nn{n} \mid \WW)$, where the \refEq{eq:joint} defines the joint probability $\pp(\xx, \yy)$. 
The gradient of the log-marginal-likelihood $\nabla_{\WW} \big (\sum_{n} \log(\pp(\xx\nn{n} \mid \WW)) \big )$ is 
\begin{align}\label{eq:grad}
\frac{1}{N}\sum_{n} \EE_{\pp(\yy \mid \xx\nn{n}, \WW)} [\yy  \cdot (\xx\nn{n})^T] \; - \; \EE_{\pp(\yy, \xx \mid \WW)} [ \yy \cdot  \xx^T] \hspace{-.1in}
\end{align}
where the first expectation is \wrt the observed data 
in which $\pp(\yy \mid \xx) = \prod_j\pp(\y_j \mid \xx)$ and $\pp(\y_j \mid \xx)$ is given by \refEq{eq:bregman_p}. The second expectation is \wrt the model of \refEq{eq:joint}. 

When discriminatively training a neuron $\ff(\sum_{i} W_{i,j} \x_i)$ using input output pairs $\mathcal{D}=\{(\xx\nn{n}, \y_j\nn{n})\}_n$, 
in order to have a loss that is convex in the model parameters $\WW_{:j}$, it is common
to use a \textit{matching loss} for the given transfer function $\ff$ \citep{helmbold1999relative}. This is simply the Bregman divergence $\DD_\ff(\ff(\eta_j\nn{n}) \|\, \y_j\nn{n})$, 
where $\eta_j\nn{n} = \sum_{i} W_{i,j} \x_i\nn{n}$.
Minimizing this matching loss corresponds to maximizing the log-likelihood of \refEq{eq:bregman_p},
and it should not be surprising that the gradient $\nabla_{\WW_{:j}} \big (\sum_{n} \DD_{\ff}(\ff(\eta_j\nn{n}) \|\, \y_j\nn{n}) \big )$ of this loss \wrt $\WW_{:j} = [W_{1,j},\ldots,W_{M,j}]$ 
\begin{align*}  
\sum_{n} \ff(\eta_j\nn{n}) (\xx\nn{n})^T\;  -\; \y_j\nn{n}(\xx\nn{n})^T
\end{align*}
resembles that of \refEq{eq:grad}, where $\ff(\eta_j\nn{n})$ above substitutes $\y_j$ in \refEq{eq:grad}.

However, note that in generative training, $\y_j$ is not simply equal to $\ff(\eta_j)$, but it is sampled from the exponential family distribution \refEq{eq:bregman_p} with the mean $\ff(\eta_j)$ -- that is $\y_j = \ff(\eta_j) + \text{noise}$.
This 
extends the previous observations linking the discriminative and generative (or regularized) training -- via Gaussian noise injection -- to the noise from other members of the exponential family \citep[\eg][]{an1996effects,vincent2008extracting,bishop1995training} which in turn relates to the regularizing role of generative pretraining of neural networks~\citep{erhan2010does}. 

Our sampling scheme (next section) further suggests that
when using output Gaussian noise injection for regularization of arbitrary activation functions, the \textbf{variance of this noise should be scaled by the derivative of the activation function.}

\subsection{Sampling}\label{sec:sampling}
To learn the generative model, we need to be able to sample from the distributions that define the expectations in \refEq{eq:grad}.
Sampling from the joint model can also be reduced to alternating conditional sampling of visible and hidden variables (\ie block Gibbs sampling). Many methods, including contrastive divergence~\citep[CD;][]{hinton2002training}, stochastic maximum likelihood \citep[\aka persistent CD][]{tieleman2008training} and their variations \citep[\eg][]{tieleman2009using,breuleux2011quickly} only require this alternating sampling
in order to optimize an approximation to the gradient of \refEq{eq:grad}. 

Here, we are interested in sampling from  $\pp(\y_j \mid \eta_j)$ and $\pp(\x_i \mid \nu_i)$ as defined in \refEq{eq:bregman_p}, which is in general non-trivial. 
 However some members of the exponential family have relatively efficient sampling procedures~\citep{ahrens1974computer}. 
One of these members that we use in our experiments is the Poisson distribution.
\begin{mdframed}[style=MyFrame]
\begin{example} \label{ex:poisson}
  For a Poisson unit, a Poisson distribution 
  \begin{align}
    \label{eq:poisson_dist}
    \pp(\y_j \mid \lambda) = \frac{\lambda^{\y_j}}{\y_j!} e^{-\lambda}  
  \end{align}
  represents the probability of a neuron firing $\y_j$ times in a unit of time, given its average rate 
is $\lambda$. 
  We can define Poisson units within Exp-RBM using $\ff_j(\eta_j) = e^{\eta_j}$, which gives $\FF(\eta_j) = e^{\eta_j}$ and 
  $\FF^*(\y_j) = \y_j (\log(\y_j) - 1)$. For $\pp(\y_j \mid \eta_j)$ to be properly normalized, since $\y_j \in \mathbb{Z}^+$ is a non-negative integer,   
  $\FF^*(\y_j) + \gh(\y_j) = \log(\y_j!) \approx \FF^*(\y_j)$ (using Sterling's approximation). This gives 
    $\pp(\y_j \mid \eta_j) \; = \; \exp \big ( \y_j\eta_j - e^{\eta_j} -  \log(\y_j!) \big)$
  which is identical to distribution of \refEq{eq:poisson_dist}, for $\lambda = e^{\eta_j}$. 
This means, we can use any available sampling routine for Poisson
  distribution to learn the parameters for an exponential family RBM where some units are Poisson.
In \Cref{sec:experiments}, we use a modified version of Knuth's method~\citep{knuth1969seminumerical} for Poisson sampling.
\end{example}
\end{mdframed}

By making a simplifying assumption, the following Laplace approximation demonstrates how to use Gaussian noise to sample from general conditionals in Exp-RBM, for ``any'' smooth and monotonic non-linearity. 
\begin{proposition}\label{prop}
  Assuming a constant base measure $\gh(\y_i) = c$, the distribution of $\pp(\y_j \,\|\, \eta_j )$ is to the second order
approximated by a Gaussian 
  \begin{align}
    \label{eq:gaussian_sampling}
     \exp \bigg( -\DD_{\ff}(\eta_j\,\|\, \y_j) - c \bigg) \approx \normal(\, \y_j \mid \ff(\eta_j),\, \ff'(\eta_j)\,)
  \end{align}
  where $\ff'(\eta_j) = \frac{\dd}{\dd \eta_j} \ff(\eta_j)$ is the derivative of the activation 
  function.
\end{proposition}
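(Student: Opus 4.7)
The plan is to apply a Laplace (second-order Taylor) expansion of the log-density $\log \pp(\y_j \mid \eta_j) = -\DD_{\ff}(\eta_j \,\|\, \y_j) - c$ in the variable $\y_j$ around its mode. Using the explicit form of the Bregman divergence from \Cref{eq:bregman},
\begin{align*}
\log \pp(\y_j \mid \eta_j) \;=\; \eta_j \y_j - \FF(\eta_j) - \FF^*(\y_j) - c,
\end{align*}
we can treat $\eta_j$ as a parameter and $\y_j$ as the variable of interest. The first two terms $\eta_j \y_j - \FF(\eta_j)$ contribute only a linear and a constant piece in $\y_j$; the only nonlinear contribution comes from $-\FF^*(\y_j)$.

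First, I would locate the mode $\y_j^*$ by setting the derivative with respect to $\y_j$ to zero. Using $\frac{\dd}{\dd \y_j} \FF^*(\y_j) = \ffinv(\y_j)$, the stationarity condition is $\eta_j - \ffinv(\y_j) = 0$, which gives $\y_j^* = \ff(\eta_j)$. So the mode of the distribution coincides with the mean parameter, as expected. Second, I would compute the second derivative at the mode: $\frac{\dd^2}{\dd \y_j^2}\log\pp(\y_j \mid \eta_j) = -\frac{\dd}{\dd \y_j} \ffinv(\y_j) = -\frac{1}{\ff'(\ffinv(\y_j))}$, which at $\y_j^* = \ff(\eta_j)$ evaluates to $-1/\ff'(\eta_j)$. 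Since $\ff$ is monotonically increasing, $\ff'(\eta_j) > 0$, so this is a valid (negative) curvature.

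Plugging into the second-order Taylor expansion around $\y_j^*$ and exponentiating yields
\begin{align*}
\pp(\y_j \mid \eta_j) \;\approx\; \mathrm{const} \cdot \exp\!\Big(-\tfrac{1}{2\ff'(\eta_j)}\,(\y_j - \ff(\eta_j))^2\Big),
\end{align*}
which, after absorbing the constant into the normalization (this is where the assumption $\gh(\y_j) = c$ is used — it guarantees that no additional $\y_j$-dependence is hiding in the base measure), is precisely $\NNN(\y_j \mid \ff(\eta_j),\, \ff'(\eta_j))$. The linear term $\eta_j \y_j$ and the constant $-\FF(\eta_j) - c$ from the original expression contribute only the normalizing factor, since a Gaussian's first-order coefficient is already encoded in its mean.

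The main obstacle is not the algebra but the justification that the Laplace approximation is meaningful here: for units whose exact conditional lives on a discrete or bounded support (\eg Bernoulli, Poisson), replacing it with a continuous Gaussian is only sensible under suitable regimes, and for units where $\gh(\y_j)$ is genuinely non-constant (see the $\FF^*/\gh$ decomposition discussion in \Cref{sec:bregman}) the stated Gaussian form is only the Laplace approximation to the modified density with $\gh$ folded into the constant. Once this modelling caveat is acknowledged, the derivation above is the whole proof, since it is just Taylor expansion plus the identity $(\FF^*)' = \ffinv$ and its derivative.
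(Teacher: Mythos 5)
Your proof is correct and follows essentially the same route as the paper's: a Laplace approximation of $\log\pp(\y_j\mid\eta_j)$ around the mode $\ff(\eta_j)$, using $(\FF^*)'=\ffinv$ to locate the mode and $(\ffinv)'(\y)=1/\ff'(\ffinv(\y))$ to obtain the curvature $-1/\ff'(\eta_j)$. The only cosmetic difference is that you differentiate the log-density directly and absorb constants into the normalizer, whereas the paper expands in $\eps=\y_j-\ff(\eta_j)$ and cancels the constant and linear terms explicitly via the conjugate duality $\FF^*(\ff(\eta_j))=\eta_j\ff(\eta_j)-\FF(\eta_j)$.
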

\begin{proof}
  The mode (and the mean) of the conditional \refEq{eq:bregman_p}
  for $\eta_j$ is $\ff(\eta_j)$. This is because the Bregman divergence $\DD_\ff(\eta_j \| \y_j)$ achieves minimum when $\y_j = \ff(\eta_j)$.
  Now, write the Taylor series approximation to the target log-probability around its mode
  \begin{subequations}
    \begin{empheq}{align}
      &\log (\,\pp(\eps + \ff(\eta_j) \mid \eta_j \,)) \notag \\
&\;=\; \log(-\DD_\ff(\eta_j \| \eps + \ff(\eta_j))) - c\notag \\
      &\;=\; \eta_j\ff(\eta_j) - \FF^*(\ff(\eta_j)) - \FF(\eta_j) \notag \\ 
      &+ \eps (\eta_j - \ffinv(\ff(\eta_j))  + \frac{1}{2} \eps^2( \frac{-1}{\ff'(\eta_j)}) + \OO(\eps^3) \label{eq:proof1}\\
      &= \eta_j\ff(\eta_j) - (\eta_j\ff(\eta_j) - \FF(\eta_j)) - \FF(\eta_j) \label{eq:proof2}\\
      &+\eps (\eta_j - \eta_j) + \frac{1}{2} \eps^2( \frac{-1}{\ff'(\eta_j)}) + \OO(\eps^3) \notag \\
      &= -\frac{1}{2} \frac{\eps^2}{\ff'(\eta_j)}+ \OO(\eps^3) \label{eq:proof3}
    \end{empheq}
  \end{subequations}
  In \refEq{eq:proof1} we used the fact that $\frac{\dd}{\dd y}\ffinv(y) = \frac{1}{\ff'(\ffinv(y))}$
  and in \refEq{eq:proof2}, we used the conjugate duality of $\FF$ and $\FF^*$.
  Note that the final unnormalized log-probability in \refEq{eq:proof3} is that of a Gaussian, with mean zero and variance $\ff'(\eta_j)$. Since our Taylor expansion was around $\ff(\eta_j)$, this
  gives us the approximation of \refEq{eq:gaussian_sampling}.
\end{proof}


\subsubsection{Sampling Accuracy}
To exactly evaluate the accuracy of our sampling scheme, we need to 
evaluate the conditional distribution of \refEq{eq:bregman_p}.
However, we are not aware of any analytical or numeric method to estimate 
the base measure $\gh(\y_j)$. Here, we replace $\gh(\y_j)$ with $\tilde{\gh}(\eta_j)$,
playing the role of a normalization constant. We then evaluate   
\begin{align}\label{eq:approx1}
\pp( \y_j \mid \eta_j) \approx \exp \big( -\DD_{\ff}(\eta_j\,\|\, \y_j) - \tilde{\gh}(\eta_j) \big)
\end{align}
where 
$\tilde{\gh}(\eta_j)$ is numerically approximated for each $\eta_j$ value. \Cref{fig:comparison_sampling} compares this density  against the Gaussian approximation  
$\pp( \y_j \mid \eta_j) \approx \normal(\, \ff(\eta_j),\, \ff'(\eta_j)\,)$.
As the figure shows, the densities are very similar.

\subsection{Bernoulli Ensemble Interpretation}
This section gives an interpretation of Exp-RBM in terms of a Bernoulli RBM with an infinite collection of Bernoulli units. 
\citet{nair2010rectified} introduce the softplus unit, $\ff(\eta_j) = \log(1 + e^{\eta_j})$, as an approximation to the rectified linear unit (ReLU) $\ff(\eta_j) = \max(0, \eta_j)$. To have a probabilistic interpretation for this non-linearity, the authors represent it as an infinite series of
Bernoulli units with shifted bias:
\begin{align}
  \label{eq:stepped_sigmoid}
  \log(1 + e^{\eta_j}) = \sum_{n = 1}^{\infty} \sigma(\eta_j - n + .5)
\end{align}
where $\sigma(x) = \frac{1}{1 + e^{-x}}$ is the sigmoid function. This means that the sample $y_j$ from a softplus unit is effectively the number of active Bernoulli units. 
The authors then suggest using $\y_j \sim
\max(0, \normal(\eta_j, \sigma(\eta_j))$ to sample from this type of unit. In comparison, our Proposition~\ref{prop} suggests using  $\y_j \sim \normal(\log(1 + e^{\eta_j}), \sigma(\eta_j))$ for softplus and $\y_j \sim \normal(\max(0, \eta_j), step(\eta_j))$ -- where $step(\eta_j)$ is the step function -- for ReLU. Both of these are very similar to the approximation of \citep{nair2010rectified} and we found them to perform similarly in practice as well. 

Note that these Gaussian approximations are assuming $\gh(\eta_j)$ is constant. 
However, by numerically approximating $\int_{\YY_j} \exp\big(-\DD_{\ff}(\eta_j \| \y_j)\big) \dd \y_j$, for $\ff(\eta_j) = \log(1 + e^{\eta_j})$, 
\Cref{fig:softplus_error} shows that the  integrals are not the same for different values of $\eta_j$, showing that the base measure $\gh(\y_j)$ is not constant for ReLU. 
In spite of this, experimental results for pretraining ReLU units using Gaussian noise
suggests the usefulness of this type of approximation. 

\begin{figure}
  \centering
\includegraphics[width=.4\textwidth]{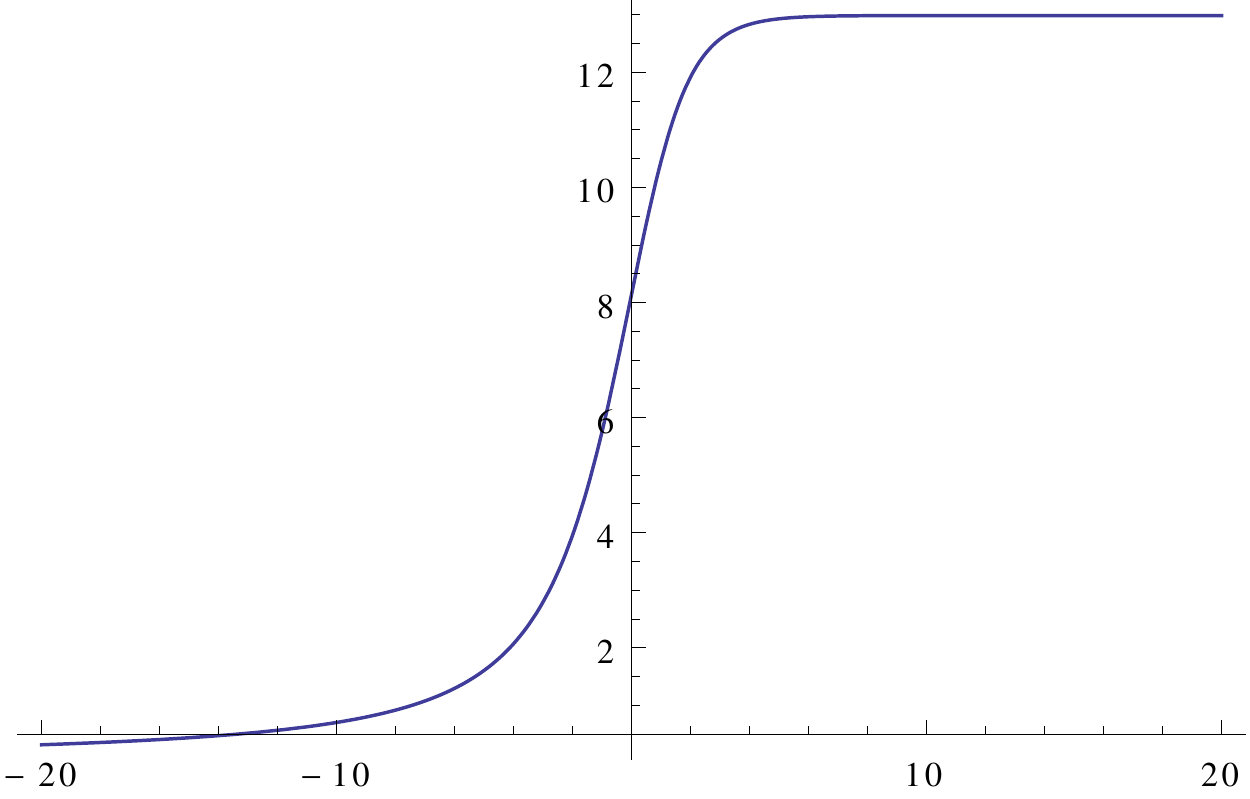} 
  \caption{\it \small  Numerical approximation to the integral $\int_{\YY_j} \exp\big(-\DD_{\ff}(\eta_j \| \y_j)\big) \dd \y_j$ for the softplus unit $\ff(\eta_j) = \log(1 + e^{\eta_j})$, at different $\eta_j$.} \label{fig:softplus_error}
\end{figure}

We can extend this interpretation as a collection of (weighted) Bernoulli units to any non-linearity $\ff$. For simplicity, let us assume $\lim_{\eta \to -\infty} \ff(\eta) = 0$ and $\lim_{\eta \to +\infty} \ff(\eta) = \infty$\footnote{The following series and the sigmoid function need to be adjusted depending on these limits. For example, for the case where $\y_j$ is antisymmetric and unbounded (\eg $\ff(\eta_j) \in \{\,\mathrm{sinh}(\eta_j),\, \mathrm{sinh^{-1}}(\eta_j),\, \eta_j\, | \eta_j|\} $), we need to change the domain of Bernoulli units from $\{0,1\}$ to $\{-.5,+.5\}$. This corresponds to changing the sigmoid to hyperbolic tangent $\frac{1}{2}\, \mathrm{tanh}(\frac{1}{2} \eta_j)$. In this case, we also need to change the bounds for $n$ in the series of \refEq{eq:series} to $\pm \infty$.}, and define the following series of Bernoulli units:
$
\sum_{n=0}^{\infty} \alpha \sigma(\ffinv(\alpha n))
$,
where the given parameter
$\alpha$ 
is the weight of each unit. Here, we are defining a new Bernoulli unit with a weight $\alpha$ for each $\alpha$ unit of change in the value of $\ff$. Note that the underlying idea is similar to that of inverse transform sampling~\citep{devroye1986non}.
At the limit of $\alpha \to 0^+$ we have
\begin{align}\label{eq:series}
  \ff(\eta_j) \approx \alpha \sum_{n=0}^{\infty}  \sigma(\eta_j - \ffinv(\alpha n))
\end{align}
that is $\hat{\y}_j \sim \pp(\y_j \mid \eta_j)$ is the weighted sum of active Bernoulli units. 
\Cref{fig:activations}(a) shows the approximation of this series for the softplus function for decreasing values of $\alpha$.
\begin{figure}
  \centering
\includegraphics[width=.6\textwidth]{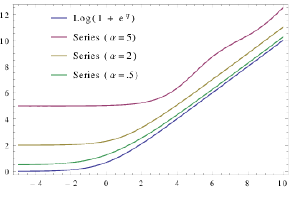} \label{fig:series}
  \caption{\it \small  reconstruction of ReLU by as a series of Bernoulli units with shifted bias.}
\end{figure}


\section{Experiments and Discussion}\label{sec:experiments}
We evaluate the representation capabilities of Exp-RBM for different stochastic units in the following two sections.
Our initial attempt was to adapt Annealed Importance Sampling \citep[AIS;][]{salakhutdinov2008quantitative} to Exp-RBMs. 
However, estimation of the importance sampling ratio in AIS for general Exp-RBM proved challenging. We consider two alternatives: 1) for large datasets, \refSection{sec:filters} qualitatively evaluates the filters learned by various units and; 2) \refSection{sec:quant} evaluates Exp-RBMs on a smaller
dataset where we can use indirect sampling likelihood to quantify the generative quality of the models with different activation functions.
 
Our objective here is to demonstrate that a combination of our sampling scheme with contrastive divergence (CD) training can indeed
produce generative models for a diverse choice of activation function. 

\subsection{Learning Filters}\label{sec:filters}
\begin{figure}
  \centering
  \includegraphics[width=1.\textwidth]{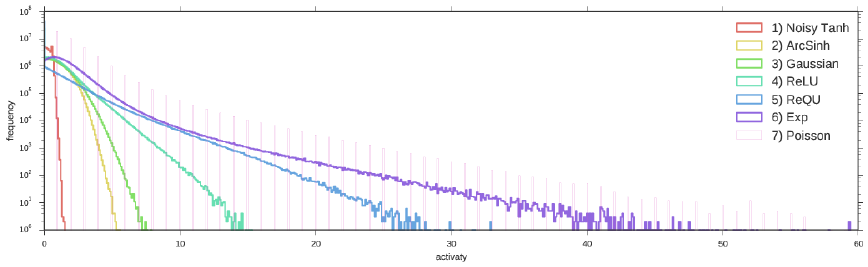}
  \caption{\it \small Histogram of hidden variable activities on the MNIST test data, for different types of units. Units with heavier tails produce longer 
    strokes in \Cref{fig:mnist}. Note that the linear decay of activities in the log-domain correspond to exponential decay with different exponential coefficients.}\label{fig:activations}
\end{figure}

\begin{figure}
  \begin{tabular}{r@{\hskip -0pt}c@{\hskip 0 pt}l}
    \scalebox{.6}{\rotatebox{90}{data}} & \includegraphics[width=.95\textwidth]{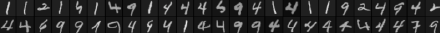} & \\[-.01in] 
    \scalebox{.5}{\rotatebox{90}{N.Tanh}} & \includegraphics[width=.95\textwidth]{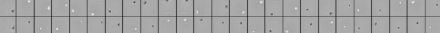}
                                    & \scalebox{.5}{\rotatebox{90}{bounded}}\\[-.02in] \cline{3-3}
    \scalebox{.5}{\rotatebox{90}{arcSinh}} & \includegraphics[width=.95\textwidth]{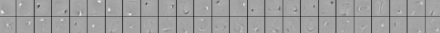}
                                                                                                                  & \tiny \rotatebox{90}{\ log.} \\[-.01in]\cline{3-3}
    \scalebox{.6}{\rotatebox{90}{SymSq}} & 
                                   \includegraphics[width=.95\textwidth]{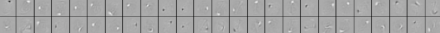}
                                                                                                                  & \tiny \rotatebox{90}{\ \ \ sqrt} \\[-.01in]\cline{3-3}
    \scalebox{.6}{\rotatebox{90}{ReL}} & \includegraphics[width=.95\textwidth]{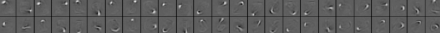}
                                                                                                                  & \tiny \rotatebox{90}{linear} \\[-.01in] \cline{3-3}
    \scalebox{.6}{\rotatebox{90}{ReQ}} & \includegraphics[width=.95\textwidth]{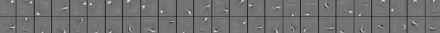}
                                                                                                                  & \multirow{2}{*}{\tiny \rotatebox{90}{ quadratic}}\\[-.01in] 
    \scalebox{.6}{\rotatebox{90}{SymQ}} & \includegraphics[width=.95\textwidth]{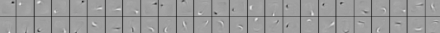}\\[-.02in] \cline{3-3}
    \scalebox{.6}{\rotatebox{90}{Sinh}} & \includegraphics[width=.95\textwidth]{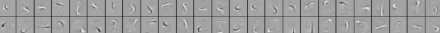}
                                                                                                                  & \multirow{3}{*}{\tiny \rotatebox{90}{ exponential}}\\[-.01in] 
    \scalebox{.6}{\rotatebox{90}{\hspace{.1in}Exp}} & \includegraphics[width=.95\textwidth]{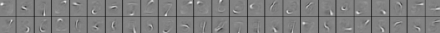}\\[-.00in] 
    \scalebox{.6}{\rotatebox{90}{Poisson}} & \includegraphics[width=.95\textwidth]{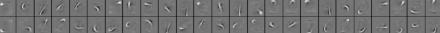}\\[-.01in] \cline{3-3}
  \end{tabular}
  \caption{\small \it Samples from the MNIST dataset (first two rows) and the filters with highest variance for different Exp-RBM stochastic units (two rows per unit type). 
    From top to bottom the non-linearities grow more rapidly, also producing features that represent longer strokes.}
  \label{fig:mnist}
\end{figure}

In this section,
we used CD
with a single Gibbs sampling step, $1000$ hidden units, Gaussian visible units\footnote{Using Gaussian visible units also assumes that the input data is normalized to have a standard deviation of 1.}, mini-batches and method of momentum, and selected the 
learning rate from $\{10^{-2}, 10^{-3}, 10^{-4}\}$ using reconstruction error at the final epoch.

The MNIST handwritten digits dataset~\citep{lecun1998gradient} 
is a dataset of 70,000 ``size-normalized and centered'' binary images.
Each image is $28 \times 28$ pixel, and represents one of $\{0,1,\ldots,9\}$ digits. See the first row of \Cref{fig:mnist} for few instances from MNIST dataset.
For this dataset we use a momentum of $.9$ and train each model for 25 epochs. \Cref{fig:mnist} shows the filters of different stochastic units;
see \Cref{table:units} for details on different stochastic units.
Here, the units are ordered based on the asymptotic behavior of the activation function $\ff$; see the right margin of the figure.
This asymptotic change in the activation function is also evident from the hidden unit activation histogram of \Cref{fig:activations}(b), where the activation are produced on the test set
using the trained model.

These two figures suggest that transfer functions with faster asymptotic growth, have a more heavy-tailed distributions of
activations and longer strokes for the MNIST dataset, also hinting that they may be preferable
in learning representation~\citep[\eg see][]{olshausen1997sparse}. However, this comes at the cost of train-ability.
In particular, for all exponential units, due to occasionally large gradients, we have to reduce the learning rate to $10^{-4}$ while the Sigmoid/Tanh unit remains stable for a learning rate of $10^{-2}$. Other factors that affect the instability of training for exponential and quadratic Exp-RBMs are large momentum and small number of hidden units. Initialization of the weights could also play an important role, and sparse initialization
 ~\citep{sutskever2013importance,martens2010deep} and regularization schemes~\citep{goodfellow2013maxout} could potentially improve the training of these models. In all experiments, we used uniformly random values in $[-.01,.01]$ for all unit types.
In terms of training time, different Exp-RBMs that use the Gaussian noise and/or Sigmoid/Tanh units have similar computation time on both CPU and GPU.


\Cref{fig:svhn}(top) shows the receptive fields for the street-view house numbers (SVHN)~\citep{netzer2011reading} dataset. This dataset contains 600,000 images of digits in natural settings. Each image contains three RGB values for
$32 \times 32$ pixels. \Cref{fig:svhn}(bottom) shows few filters obtained from
the jittered-cluttered NORB dataset~\citep{lecun2004learning}. NORB dataset contains 291,600 stereo $2 \times (108 \times 108)$ images of 50 toys under different lighting, angle and backgrounds. Here, we use a sub-sampled $48 \times 48$ variation, and report the features learned by two types of neurons. For learning from these two datasets, we increased the momentum to $.95$ and trained different models using up to 50 epochs.

\begin{figure}
  \begin{tabular}{l@{\hskip -0pt}c@{\hskip -0pt}l}
    \scalebox{.6}{\rotatebox{90}{dataset}} & \includegraphics[width=.93\textwidth]{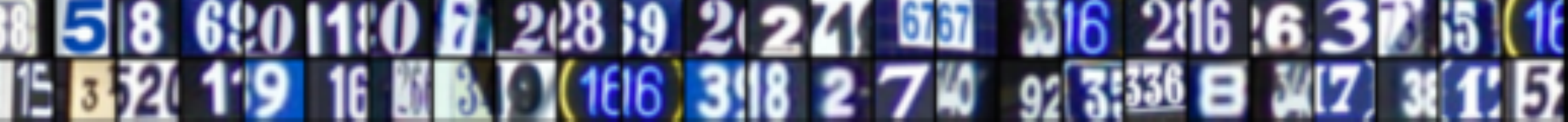} &  \multirow{3}{*}{\tiny \rotatebox{90}{ SVHN}}\\[-.02in]  
    \scalebox{.6}{\rotatebox{90}{sigmoid}} & \includegraphics[width=.93\textwidth]{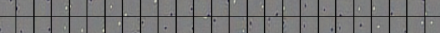} \\[-.02in] 
 \scalebox{.6}{\rotatebox{90}{ReQU}} &                                  \includegraphics[width=.93\textwidth]{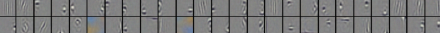} \\[0in]  
    \scalebox{.6}{\rotatebox{90}{dataset}} & \includegraphics[width=.93\textwidth]{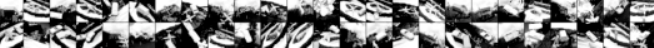} 
& \multirow{3}{*}{\tiny\rotatebox{90}{NORB}}\\[-.02in]  
    \scalebox{.6}{\rotatebox{90}{Tanh}} & 
\includegraphics[width=.93\textwidth]{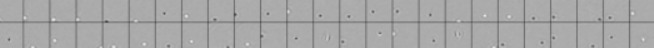} \\[-.02in]  
 \scalebox{.6}{\rotatebox{90}{SymQU}} &
\includegraphics[width=.93\textwidth]{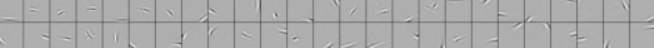} \\[-.02in] 
  \end{tabular}
  \caption{\small \it Samples  and the receptive fields of different stochastic units for 
from the \textbf{(top three rows)} SVHN dataset and \textbf{(bottom three rows)} $48 \times 48$ (non-stereo) NORB dataset with jittered objects and cluttered background. Selection of the receptive fields is based on their variance.}
  \label{fig:svhn}
\end{figure}

\subsection{Generating Samples}\label{sec:quant}
The USPS dataset~\citep{hull1994database} is relatively smaller dataset of 9,298, $16\times16$
 digits. We binarized this data and used $90\%$, $5\%$ and $5\%$ of instances for training, validation and test respectively; see \Cref{fig:samples} (first two rows) for instances from this dataset. We used Tanh activation function for the $16 \times 16 = 256$ \textit{visible} units of the Exp-RBMs%
\footnote{Tanh unit is similar to the sigmoid/Bernoulli unit, with the difference that it is (anti)symmetric $\x_i \in \{-.5, +.5\}$.}
and 500 hidden units of different types: 1) Tanh unit; 2) ReLU; 3) ReQU and 4)Sinh unit.

\begin{figure}
  \begin{tabular}{l@{\hskip -0pt}c}
    \scalebox{.6}{\rotatebox{90}{dataset}} & \includegraphics[width=.93\textwidth]{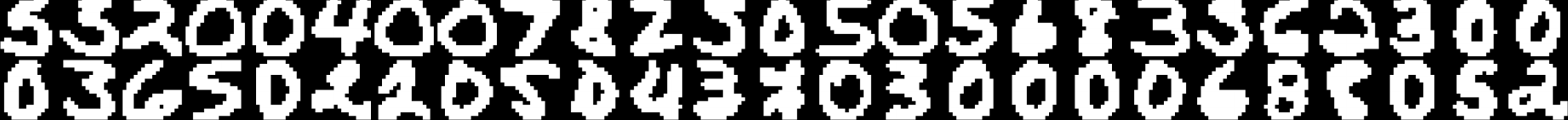} \\[-.02in] 
    \scalebox{.6}{\rotatebox{90}{Tanh}} & \includegraphics[width=.93\textwidth]{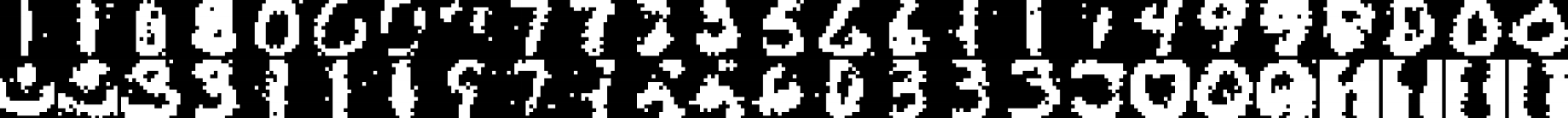} \\[-.02in] 
 \scalebox{.6}{\rotatebox{90}{ReL}} & \includegraphics[width=.93\textwidth]{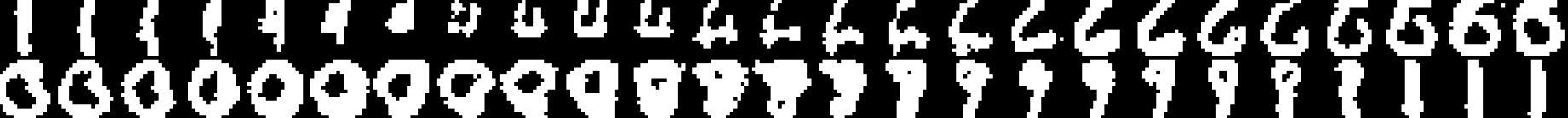} \\[-.02in]  
    \scalebox{.6}{\rotatebox{90}{ReQ}} & \includegraphics[width=.93\textwidth]{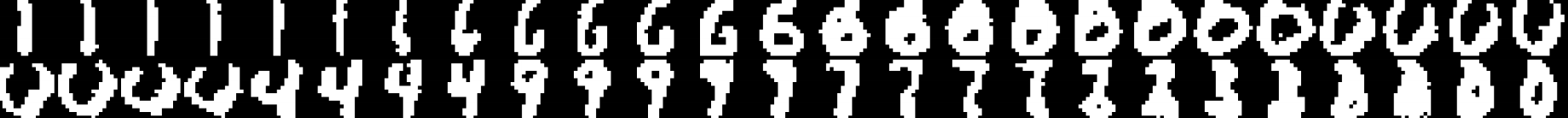}\\[-.02in]  
\scalebox{.6}{\rotatebox{90}{Sinh}} &
\includegraphics[width=.93\textwidth]{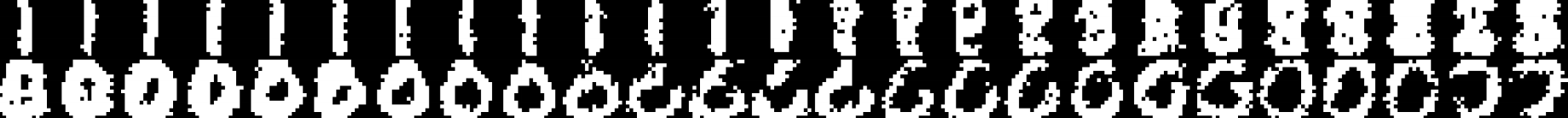}
  \end{tabular}
  \caption{\small \it Samples from the USPS dataset (first two rows) and few of the consecutive samples generated from different Exp-RBMs using rates-FPCD.}
  \label{fig:samples}
\end{figure}


We then trained these models using CD with 10 Gibbs sampling steps. 
Our choice of CD rather than 
alternatives that are known to produce better generative models, such as Persistent CD \citep[PCD;][]{tieleman2008training}, fast PCD~\citep[FPCD;][]{tieleman2009using} and  
\citep[rates-FPCD;][]{breuleux2011quickly} is due to practical reasons; these alternatives were unstable for some activation functions, while CD was always well-behaved. 
We ran CD for 10,000 epochs with three different learning rates $\{.05, .01,.001\}$ for each model. Note that here, we did not use method of momentum and mini-batches in order to to minimize the number of hyper-parameters for our quantitative comparison. We used rates-FPCD \footnote{ We used $10$ Gibbs sampling steps for each sample, zero decay of fast weights -- as suggested in ~\citep{breuleux2011quickly} -- and three different fast rates $\{.01, .001,.0001\}$.} to generate  $9298 \times \frac{90}{100}$  samples from each model -- \ie the same number as the samples in the training set. We produce these sampled datasets every 1000 epochs.
\Cref{fig:samples} shows the samples generated by different models at their final epoch, for the ``best choices'' of sampling parameters and learning rate.

\begin{figure}
  \centering
\hbox{
\includegraphics[width=.55\textwidth]{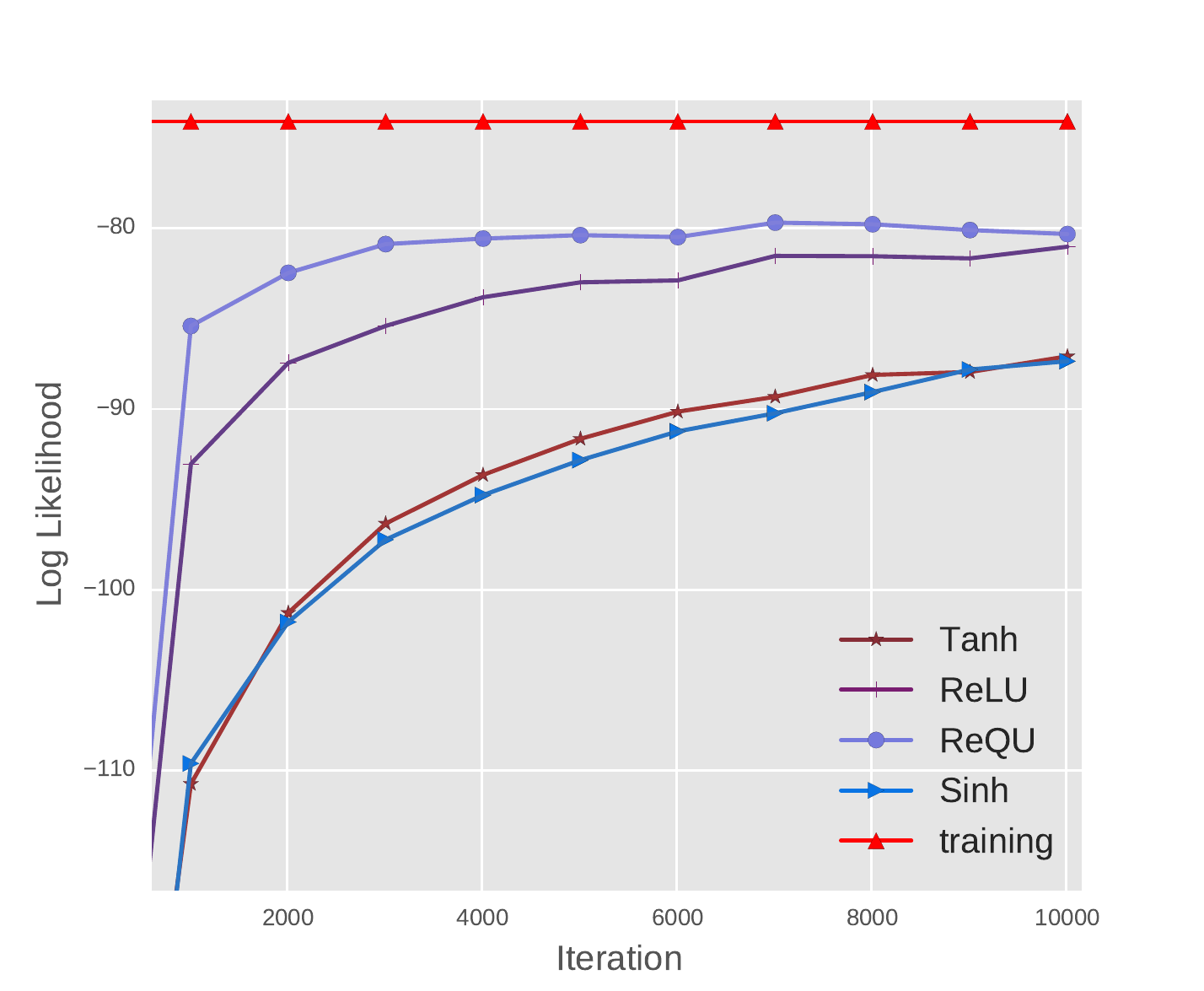} 
\includegraphics[width=.45\textwidth]{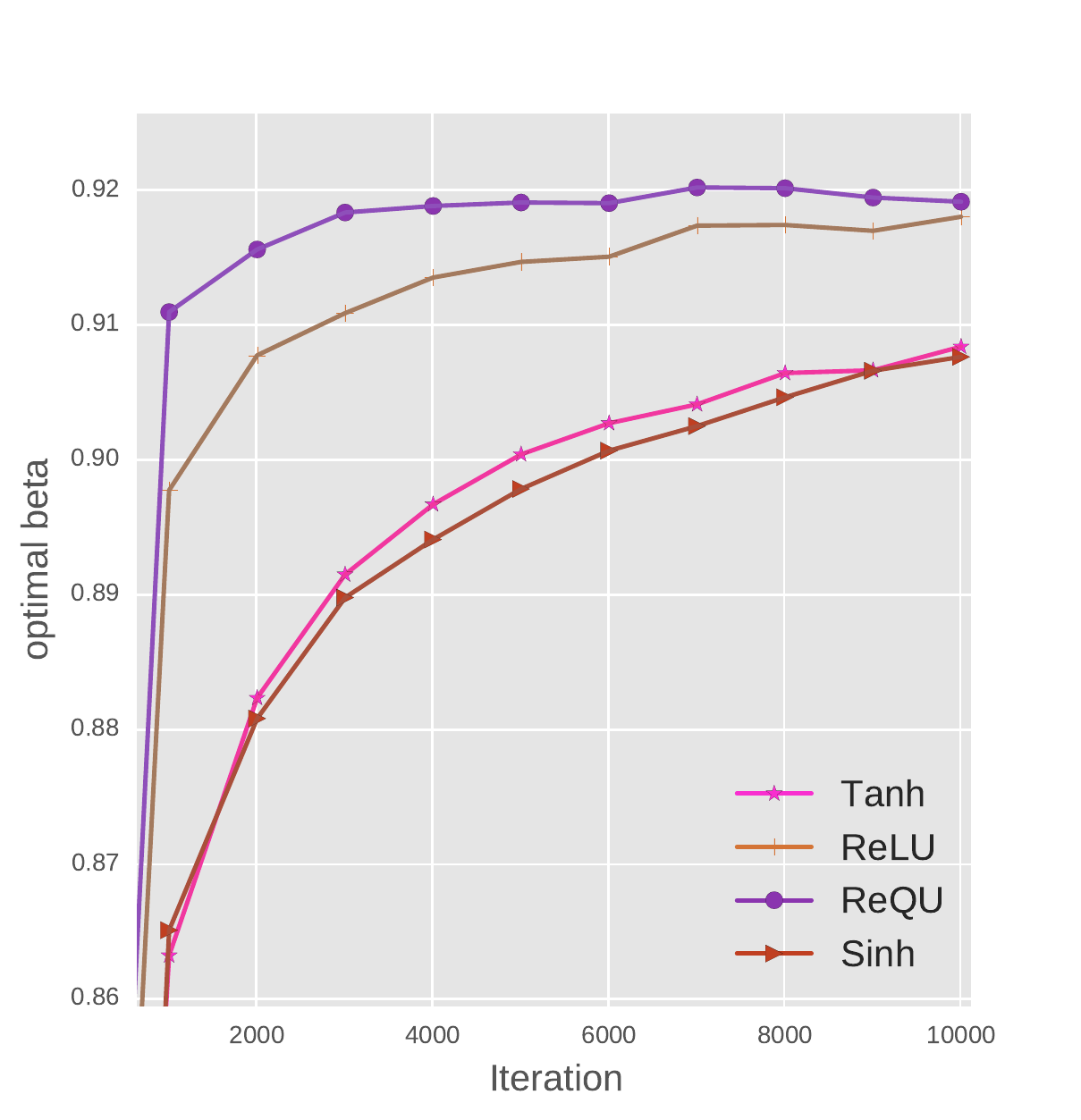} 
}
  \caption{\it \small  Indirect Sampling Likelihood of the test data (left) and $\beta^*$ for the density estimate (right) at different epochs (x-axis) for USPS dataset. }\label{fig:ISL}
\end{figure}

We then used these samples $\DD_{sample} = \{\xx\nn{1},\ldots,\xx\nn{N=9298}\}$, from each model to estimate the Indirect Sampling Likelihood \citep[ISL;][]{breuleux2011quickly} of the validation set. For this, we built a non-parametric density estimate 
\begin{align}
  \label{eq:kde}
  \hat{\pp}(\xx; \beta) = \sum_{n=1}^{N} \prod_{j=1}^{256} \beta^{\identt(\x_j\nn{n} = \x_j)} (1 - \beta)^{\identt(\x_j\nn{n} \neq \x_j)}
\end{align}
and optimized the parameter $\beta \in (.5,1)$ to maximize the likelihood of the validation set -- that is $\beta^* = \arg_{\beta}\max\, \prod_{\xx \in \DD_{valid}} \hat{\pp}(\xx, \beta)$. Here,  $\beta = .5$ defines a uniform distribution over all possible binary images, while for $\beta = 1$, only the training instances have a non-zero probability.

We then used the density estimate for $\beta^*$ as well as the best rates-FPCD sampling parameter to evaluate the ISL of the \textit{test set}.
At this point, we have an estimate of the likelihood of test data for each hidden unit type, for every 1000 iteration of CD updates. 
The likelihood of the test data using the density estimate produced \textit{directly from the training data}, gives us an upper-bound on the ISL of these models.

\Cref{fig:ISL} presents all these quantities: for each hidden unit type, we present the results for the learning rate that achieves the highest ISL. 
The figure shows the estimated log-likelihood  (left) as well as $\beta^*$ (right) as a function of the number of epochs. 
As the number of iterations increases, all models produce samples
that are more representative (and closer to the training-set likelihood). This is also consistent with $\beta^*$ values getting closer to $\beta^*_{training} = .93$, the optimal parameter for the training set.

In general, we found stochastic units defined using ReLU and Sigmoid/Tanh to be the most numerically stable. However, for this problem, ReQU learns the best model and even by increasing the CD steps to 25 and also increasing the epochs by a factor of two we could not produce
similar results using Tanh units.  
This shows that a non-linearities outside the circle of well-known and commonly used exponential family, can sometimes produce more powerful generative models, even using an ``approximate'' sampling procedure.

\section*{Conclusion}
This paper studies a subset of exponential family Harmoniums (EFH) with a single sufficient statistics for the purpose of learning generative models. 
The resulting family of distributions, Exp-RBM, gives a freedom of choice for the activation function of individual units, paralleling the freedom in discriminative training of neural networks. 
Moreover, it is possible to efficiently train arbitrary members of this family.
For this, we introduced a principled and efficient approximate sampling procedure and demonstrated 
that various Exp-RBMs can learn useful generative models and filters.

\bibliographystyle{plainnat}
  \bibliography{refs}
\clearpage
\appendix

\SetKwInput{KwInput}{Input}
\SetKwInput{KwOutput}{Output}
\SetEndCharOfAlgoLine{}
\SetKw{init}{Initialize}
\SetKw{fix}{Fix}
\begin{algorithm}[h]
\caption{Training Exp-RBMs using contrastive divergence}\label{alg:1}
\KwInput{training data $\DD = \{\x\nn{n}\}_{1 \leq n \leq N}$ ;\#CD steps; \#epochs; learning rate $\lambda$; activation functions $\{\ff(\x_i)\}_i, \{\ff(\y_j)\}_j$}
\KwOutput{model parameters $\WW$}
\init{$\WW$}\;
\For{\#epochs}{
\tcc{positive phase (+)}
$^{+}\eta_j\nn{n} = \sum_{i} \W_{i,j}\, ^{+}\xx\nn{n}_{i} \quad \forall j,n$\;
\lIf{using Gaussian apprx.}{
  $^{+}\y\nn{n}_j \sim \normal(\ff(^{+}\eta\nn{n}_j), \ff'(^{+}\eta\nn{n}_j)) \quad \forall j,n$}
\lElse{$^{+}\y\nn{n}_j \sim \pp(\y_j \mid ^+\xx\nn{n}) \quad \forall j,n$}

$^-\yy\nn{n} \leftarrow ^+\yy\nn{n} \quad \forall n$\;

\tcc{negative phase (-)}
\For{\#CD steps}{
$^{-}\nu_i\nn{n} = \sum_{j} \W_{i,j}\, ^-\y\nn{n}_{i} \quad \forall i,n$\;
\lIf{using Gaussian apprx.}{
  $^{-}\x\nn{n}_i \sim \normal(\ff(^{-}\nu\nn{n}_i), \ff'(^{-}\nu\nn{n}_i)) \quad \forall i,n$}
\lElse{$^{-}\x\nn{n}_j \sim \pp(\x_j \mid \yy\nn{n}) \quad \forall i,n$}

$^{-}\eta_j\nn{n} = \sum_{i} \W_{i,j}\, {}^{-}\xx\nn{n}_{i} \quad \forall j,n$\;
\lIf{using Gaussian apprx.}{
  $^{-}\y\nn{n}_j \sim \normal(\ff(^{-}\eta\nn{n}_j), \ff'(^{-}\eta\nn{n}_j)) \quad \forall j,n$}
\lElse{$^{+}\y\nn{n}_j \sim \pp(\y_j \mid ^-\xx\nn{n}) \quad \forall j,n$}
}
 $\W_{i,j} \leftarrow \W_{i,j} + \lambda \bigg ((^+\x_i\,^{+}\y_j) - (^-\x_i\,^{-}\y_j) \bigg ) \forall i,j$\;
}
\end{algorithm}

\end{document}